\Crefname{equation}{Eq.}{Eqs.}
\Crefname{figure}{Fig.}{Fig.}
\Crefname{tabular}{Tab.}{Tabs.}
\Crefname{table}{Tab.}{Tabs.}
\Crefname{section}{Sec.}{Sec.}
\Crefname{appendix}{App.}{App.}
\newtheorem{lemma}{Lemma}
\DeclareMathOperator*{\argmax}{arg\,max}
\DeclareMathOperator*{\argmin}{arg\,min}
\begin{document}

\title{\vspace{-9mm}Continuous Homeostatic Reinforcement Learning for Self-Regulated Autonomous Agents}

\author{\name Hugo Laurençon(1,2) \thanks{Work done during an internship at Laboratoire de Neurosciences Cognitives, École normale supérieure, Paris.} \email hugo.laurencon@gmail.com \\
       \name Charbel-Raphaël Ségerie(2)  \email crsegerie@gmail.com \\
       \name Johann Lussange(1)  \email johann.lussange@ens.fr \\
       \name Boris S. Gutkin(1,3) \email boris.gutkin@ens.fr \\
      \addr (1) Group for Neural Theory, Laboratoire des Neurosciences Cognitives et Computationelles (LNC2) ISENR U960, Ecole Normale Supérieure,\\
      Paris, France \\
      \addr (2) Ecole Normale Supérieure Paris-Saclay,\\
      Université Paris-Saclay,\\
      Paris, France \\
      \addr (3) Center for Cognition and Decision Making, \\ 
      Institute of Cognitive Neuroscience, \\
      NRU Higher School of Economics, Moscow, Russia}
\maketitle

\begin{abstract}

Homeostasis is a prevalent process by which living beings maintain their internal milieu around optimal levels. Multiple lines of evidence suggest that living beings learn to act to predicatively ensure homeostasis (allostasis). A classical theory for such regulation is drive reduction, where a function of the difference between the current and the optimal internal state. The recently introduced homeostatic regulated reinforcement learning theory (HRRL), by defining within the framework of reinforcement learning a reward function based on the internal state of the agent, makes the link between the theories of drive reduction and reinforcement learning. The HRRL makes it possible to explain multiple eating disorders. However, the lack of continuous change in the internal state of the agent with the discrete-time modeling has been so far a key shortcoming of the HRRL theory. Here, we propose an extension of the homeostatic reinforcement learning theory to a continuous environment in space and time, while maintaining the validity of the theoretical results and the behaviors explained by the model in discrete time. Inspired by the self-regulating mechanisms abundantly present in biology, we also introduce a model for the dynamics of the agent internal state, requiring the agent to continuously take actions to maintain homeostasis. Based on the Hamilton-Jacobi-Bellman equation and function approximation with neural networks, we derive a numerical scheme allowing the agent to learn directly how its internal mechanism works, and to choose appropriate action policies via reinforcement learning and an appropriate exploration of the environment. Our numerical experiments show that the agent does indeed learn to behave in a way that is beneficial to its survival in the environment, making our framework promising for modeling animal dynamics and decision-making.

\end{abstract}

\section{Introduction}

Reinforcement learning (RL) has been of particular interest in recent years in machine learning (ML) and artificial intelligence. Dramatic advances have been made (\citep{Mnih} \citep{Silver}, \cite{Silver2018}), particularly due to progress in deep learning (\citep{Krizhevsky}). The general framework of RL (\citep{Sutton}), which studies how an agent optimizes its decision-making in an environment, is easily applicable to many fields, such as economics (\citep{Lussange}), psychology (\citep{Shteingart}), control theory (\citep{Kretchmar}) or neuroscience (\citep{Niv}).

Arguably, the differences in how RL has been applied across the different disciplines, is influenced by the major goal: in neuroscience research, one seeks to explain how the brain carries out cognitive tasks, while artificial intelligence tries to design models that can solve a fixed task. Yet, both of these fields have started to intermingle, with RL modeling being a main area of advance (\citep{Kriegeskorte}, \citep{Richards}). Indeed, current ML models propose a framework for neuroscience (\citep{Cichy}, \citep{Guclu}, \citep{Kietzmann}) and make it possible to do reverse engineering (\citep{Dubreuil}). Conversely, the observation of human behaviors to create brain-inspired architectures often leads to better performance, as with the attention mechanism (\citep{Vaswani}) or with the lifelong learning (\cite{Tessler}). For example, it was recently found that distributional RL, which produced better results than the classical approach (\citep{Bellemare}), is actually performed in the brain (\citep{Dabney}).

An important and challenging point relevant for neuroscience is therefore to propose a model of an agent with characteristics and behaviors compatible with biological and psychological data and aspects of real beings (e.g. rodents, primates, humans). For example, an agent that has both physiological and psychological needs, and an ability to learn (an agent needs to eat, sleep, consumes energy in actions and is driven by motivation to act in the world). This agent would thus evolve in an environment where it has internal needs, and would choose its actions to fulfill them. One prominent approach for such bio-mimetic agents is to ground them in homeostatic and allostatic regulation principles. Following this framework, \citeauthor{Man} have conceptually defined a class of robots based on homeostasis for living. Robots would have emotional qualities (feelings, emotions, affect) directly motivated by their internal states, would operate in a vast environment and not only on a very specific task, and would not be prepared for all eventualities but rather would learn directly from interacting with their environment and observing their internal states. The RL theory, in which an agent chooses its actions in order to maximize its rewards in its environment, proposes a framework adapted to this modeling. Along with this, and also for a behavioral study of animals, the homeostatic regulation theory provides a framework where an agent receives negative feedbacks as soon as its internal state deviates from its homeostatic set point, i.e. its ideal point (\citep{Staddon}, \citep{Toates}). This second theory therefore proposes a model where an agent would behave in such a way as to minimize internal deviations from its homeostatic set point, as opposed to the RL theory where one seeks to maximize its external rewards.

\citeauthor{Keramati} linked the two theories by defining a reward function based on the internal state of the agent, and by showing that under this model, the problem of maximizing reward and the problem of minimizing deviations from the homeostatic set point are in fact equivalent. This homeostatically regulated reinforcement learning (HRRL) theory has made it possible to model primitive behaviors in agents, such as the consumption of different resources, but also many more evolved behaviors, such as risk aversion, alcohol tolerance, cocaine addiction or anticipatory control (\citep{Keramati2014}, \citep{KeramatiThesis}). However, there are limitations to this theory. For example, the internal state of the agent is assumed to be fixed when the agent does not take action, thus, the agent only has to reach its homeostatic set point once and take no more action to be in its perfect state. This is obviously not the case in practice, where if at one moment the agent has an optimal amount of a certain resource in its body, it will still need to consume it again a few hours later, in order to balance the consumption of this amount by its body to live. The agent thus performs the process of allostasis, which consists of achieving stability or homeostasis through physiological or behavioral change (\cite{Ramsay}). Moreover, in \cite{Keramati}, the actions taken by the agent are necessarily carried out at discrete and regular time steps, which results in a discount factor that does not model the time in the temporal preference of the present over the future, but only models the preference to carry out one action before another, without any notion of temporal relationship (e.g. the delay) between the two actions.

In this paper, we propose an extension of this HRRL theory in continuous time by including a dynamic to the agent. The main \textbf{contributions} are the following:

\begin{itemize}[leftmargin=12mm, itemsep=0mm, partopsep=0pt,parsep=0pt]
    \item \textbf{Agent embodiment and internal state dynamics:} We take into account the dynamics of the internal state of the agent, based on a self-regulating effect of the body state. We consider a continuous change of internal state for the agent even if no action is taken. Therefore, the agent has a constant need to regulate itself by choosing the actions that bring it the rewards it seeks. We call agents verifying this type of dynamic self-regulated agents. We also take into account the agent's internal energy levels, which need to be replenished by periods of rest that are caused by states of excessive fatigue (during which actions cannot be performed).
    \item \textbf{Continuous time implementation:} We get closer to reality by proposing a continuous-time model, where the agent evolves and moves in a continuous environment in space. We show that the theoretical results present on the discrete model remain true in continuous time. The continuity in time also allows for a better compactness of the model.
    \item \textbf{Self-learning of the agent in interaction with the environment:} In numerous previous studies, the lack of dynamics and discrete time limited the realism of numerical simulations in complex environments. Here, we propose a numerical scheme allowing the agent to discover and explore its environment, and to learn on its own how its internal state ("the body") functions and reacts to the different actions it takes. Using this information that is learned over time, the agent can constantly update its policy towards actions that will ensure the most future rewards. This policy depends on its internal state, but also on the structure of the environment, which was not the case before.
\end{itemize}

\subsection{Background and Related Work}

In this work, we consider the dynamic interactions between the agent's internal state and the environment in which the agent acts through reinforcement learning of optimal reward-seeking behaviors. There have been several previous approaches to incorporating internal states into learning and motivation. We briefly list a few examples, starting with classical drive reduction theory to Homeostatically Regulated Reinforcement Learning (HRRL).

\textbf{Drive reduction theory:} Models based on homeostatic regulation assume that the behavior of animals is a function of their internal states. This implies that animals would have representations of physiological variables derived from their organic needs, and that each of these variables would have an ideal value, called the homeostatic set point, for the proper functioning of the organism. In an animal, the more the value of a variable deviates from its homeostatic set point, the more the animal would be in a negative affective state and this negative deviation would produce a motivational drive to correct the need. Thus, these models, mathematically related to control theory, are also called negative feedback models because they are based on the regularity of the variables describing the internal state of an animal, returning a punishment when this regularity is no longer respected. The drive of an animal is defined as a measure of the deviation of its internal state from its homeostatic set point. Heuristically, it is therefore a measure of the animal's discomfort or negative affect. The theory of drive reduction proposed by Hull (\cite{Hull}) assumes that an animal chooses its actions to reduce its drive. However, although it has had some success in explaining certain behaviors (\citep{Staddon}, \citep{Toates}), this theory has a number of major drawbacks that have limited its applicability: For example, it cannot explain anticipatory consumption or anticipatory response, when an animal does not deviate from its homeostatic set point but still consumes a resource in anticipation of a future need or produces a response in anticipation of a future homeostatic challenge (\cite{Wingfield}). To understand this type of behavior, it is not enough to observe the internal state of the animal, it is also necessary to model the environment and the amount of resources present, as well as the animal's ease of access to these resources and, last but not least, the animal's predictive learning processes. 

\textbf{Hullian drives in reinforcement learning:} There has been at least one previous attempt to incorporate the concept of Hullian drive into reinforcement learning algorithms. For example, \citeauthor{Konidaris} introduced an RL model whose reward function is a sum of Hullian drive weighted by time-dependent coefficients indicating the priority of the drive. A Hullian drive is a drive varying between 0 for total dissatisfaction and 1 for total satiation. In \citeauthor{Konidaris}, the authors put priority coefficients to give importance to hard-to-reach resources in the environment. In our model, we choose not to put a priority coefficient because it is then easier to obtain intuitions and theoretical results on certain behaviors. Moreover, thanks to the exploration of its environment, our agent learns directly the quantities available for each resource and adapts its policy accordingly. As in this study, the agent's policy depends on its internal state, but also on its external environment. The authors applied a constant penalty for each move for each drive where a resource is not consumed. We go further with a regulatory effect of an arbitrary function of time and control (which is not a function only of motion as before) that takes into account the correlations between the different drives. In addition, the simulation used the SARSA algorithm, which is not always robust for small-time steps. We address this limitation in this study.

\textbf{Homeostatically Regulated RL (HRRL or HRL):} \citeauthor{Keramati} introduced an RL theory where rewards and punishments are directly derived from internal state deviations and the Hullian drive function. HRL denotes the agent's physiological state $x_t = (x_{1,t}, \dots, x_{N,t})$ with $N$ variables at time $t$ and by $x^{*} = (x_{1}^{*}, \dots, x_{N}^{*})$ the homeostatic set point. The drive function is defined by $d(x_t) = (\sum_{i=1}^N |x_i^{*} - x_{i,t}|^n)^{1/m}$ with $n,m$ integers $\geq 1$, and the reward of the transition from state $h_t$ to $x_{t+1}$ by $d(x_t) - d(x_{t+1})$. In this configuration, Keramati and Gutkin showed, among other things, that maximizing the sum of discounted rewards is equivalent to minimizing the sum of discounted drives, thus producing a suitable reinforcement learning framework related to drive reduction theory. The authors have obtained theoretical results showing certain behaviors, but have not performed numerical simulations with an artificial agent acting in simulated physical environments and learning according to this scheme, which we do in this study by providing the agent with a policy and a learning method.

\section{Preliminaries}

In a general reinforcement learning problem (\citep{Sutton}), an agent, in a certain state, can select an action among those available in its state. This action will take it to another state and give it an immediate reward. It can then choose another action, and so on. Its goal is to find the policy that maximizes the discounted sum of future rewards for each state at time $t$. The value function, for a certain policy, is a function that associates to a state the expected discounted sum of future rewards for an agent starting from this state and following this policy.

Let $n$ be an integer, $\zeta_t \in \mathrm{R}^n$ the state (internal and external) of the agent at time $t$ and $\zeta: t \mapsto \zeta_t$ the trajectory function of the agent's state. We denote the space of possible actions at time $t$ when the agent is in $\zeta_t$ by $\mathcal{A}_{\zeta_t, t}$ and the space of all actions by $\mathcal{A}$. The policy function determining the agent's choices is denoted by $\pi: \mathrm{R}^{n} \mathrm{R} \to \mathcal{A}_{\zeta_t, t}, (\zeta_t, t) \mapsto a$, and the reward received by following this policy at time $t$ by $r(t) = r_{\pi(\zeta_t, t), \zeta_t, t}$. We denote by $\Pi$ the set of all admissible policies. Note that in the deterministic case and for a fixed policy, the initial state $(\zeta_0, t_0)$ and the policy function $\pi$ completely determine the path function $\zeta$ and the reward function $r$. The value function, for an agent in $\zeta_t$ at time $t$ following a policy $\pi$, is defined as follows
\begin{align}
    V^\pi(\zeta_t, t) = \int_{t}^{\infty} \gamma^{s-t} r(s) \, \mathrm{d}s \label{eq:V}
\end{align}
where $\gamma \in ]0,1[$ is the discount factor that accounts for time preference. The optimization problem for the agent is
\begin{align}
    \argmax_{\pi \in \Pi} V^\pi, \label{eq:problem}
\end{align}
where it is trying to find the policy function that maximizes the value function for each state at each time.

\section{Methods}
    
\subsection{Modeling}

\subsubsection{Dynamics and decision-making}

In what follows, variables useful for modeling will be defined, and the dimension of these variables and their physical meanings will depend on the level of abstraction of the model, depending on whether one considers a macroscopic scale, or rather a microscopic scale for example. This level of abstraction must be considered as a parameter in itself of the model that is set. The higher the dimension of the variables (or the lower the scale), the more biologically possible the model is. The lower this level (or the higher the scale), the more efficient the model is in simulations and the more understandable it is. To understand the principle, examples of such scales are given in Table~\ref{tab:ex_scales}.

\begin{table}
  \centering
  \begin{tabular}{|c|c|c|c|c|c|c|c|}
    \hline
    \diagbox{Feature}{Scale}
                   & Macroscopic & Microscopic \\
    \hline
    $x$ & Quantity of food in the body & Concentration of glucose in the body \\
    \hline
    $e$ & Spatial coordinates of the agent & Probability of finding a resource \\
    \hline
  \end{tabular}
  \medskip
  \caption{Examples of features in $x$, the internal state $x$, and $e$, the external environment, for a macroscopic and a microscopic scale. Only features in $x$ are homeostatic.}
  \label{tab:ex_scales}
\end{table}

We consider a framework in which an agent evolves in a world in which its motivations and decision processes depend on the perception of its internal state as well as on the perception of its external environment. Let $x \in \mathrm{R}^{n_{int}}$ be a vector describing its internal state where each feature needs to be regulated, and let $x^*$ denote its homeostatic set point. Assume that each feature of $x$ is bounded. This is a constraint from the agent's embodiment, which will not be valid if a feature is too small or too large, and will either be regulated automatically by the organism or will cause the agent's death. We further propose that the internal information the agent has access to are the differences between the individual internal variables and their respective set points: $\delta := x - x^*$ (where the order of the difference is arbitrarily chosen and can be evaluated as RL comparison rewards (\cite{Matignon}). For example, the agent may encode a hunger signal at a given time, but not exactly the optimal amounts of its bodily needs, nor its current state. We also define the external environment of the agent within its view-field $e \in \mathrm{R}^{n_{ext}}$, and the entire world $\zeta = [\delta^{T}, e^{T}]^{T}$. At any time $t$, the agent in $\zeta_t$ can perform an action $a \in \mathcal{A}_{\zeta_t, t}$. The agent is limited in its choice by its environment (for example by the place in which it is) and by its internal state and time (because some actions depend on energy). The action taken, in turn, will have a consequence (a control $u \in \mathrm{R}^{n_{int} + n_{ext}}$) for its internal state and its environment. We assume that from the agent's point of view, the dynamics of $\zeta$ is described by an equation of the form
\begin{equation}
    d \zeta = f(\zeta, u, t)dt + g(\zeta, t)dS \label{eq:main}
\end{equation}
where $f, g$ are functions and $S$ is a stochastic process. $f$, $g$ and $S$ are unknown to the agent at the beginning of its task: it does not have the information of how its body and the external world react and has no estimate of the behavior of the stochastic process $S$.

We can distinguish several potentially overlapping causes of a change in the agent's internal state and environment:
\newline
(a) an unconscious automatic autoregulation of the organism, modeled by the function $f$ and its variable $\zeta$, which can account for internal processes of the agent's body (e.g., animal physiology, robot mechanics, and multi-component interactions). We note that from the perspective of biology, self-regulation is a common physiological process (\cite{Polynikis}, \cite{Pattaranit}). To give some examples, the kidney uses a mechanism called tubuloglomerular feedback to regulate the glomerular filtration rate in response to changes in sodium concentration (\cite{Versypt}, \cite{Thomson}). Autoregulation of cerebral blood flow has been demonstrated in the presence of \ce{CO_2} (\cite{Panerai}). This change in $\zeta$ takes place without the agent taking any action of its own. If the agent does not perform any action, its internal state will naturally deviate from its current state;
\newline
(b) a control that the agent exercises and that has an impact on its environment and its internal state, modeled by the function $f$ and its variable $u$. For example, the action of moving, which modifies the environment while requiring an effort, and which thus has an impact on the internal state;
\newline
(c) the time that changes and modifies the external environment and the internal state, modeled by the function $f$ and its variable $t$. For example, the time of day and the current season will drive the temperature and brightness of the environment in a certain direction. Time can also model old age, by progressively modifying the function $f$, and thus the way the body reacts over time;
\newline

(d) a stochastic control that the agent undergoes, leading to an unexpected change mostly in the environment, modeled by the $g$ function and the stochastic process $S$. Stochasticity intervenes in everything that the agent cannot control, in particular the behavior of other agents around it or the weather.

The control, by changing the environment, has a direct impact on the actions that can be taken in the future, since these will depend on the new environment. By changing the internal state, it also has a direct impact on the agent's drive $d$. Control brings the agent to a more or less comfortable state, depending on whether it is moving towards or away from its homeostatic set point $x^*$, or equivalently whether the drive is decreasing or increasing. The agent's goal is to minimize its drive on its task by finding the optimal policy that will allow it to take good actions in order to stay as close as possible to its homeostatic setpoint. The deviation function $J^\pi : \mathrm{R}^{n_{int} + n_{ext}} \times \mathrm{R} \to \mathrm{R}$ for an admissible policy $\pi$, which represents the integral of the agent's discounted drive over its remaining lifetime, is given by
\begin{equation}
    J^\pi(\zeta_t, t) = \mathbb{E} \left ( \int_{t}^{\infty} \gamma^{s-t} d(\delta(s)) \mathrm{d}s \right ) \ \label{eq:J}
\end{equation}
where $\zeta$ follows equation \eqref{eq:main} (and thus $\delta$ depends on $\pi$) on $[t,+\infty[$ with the initial condition $\zeta(t) = \zeta_t$, the control function $u$ satisfies $\forall s \in [t,+\infty[, u(s) = u_{\pi(\zeta(s),s)}$ (we will say that the control function is associated with the policy if it meets this last condition) and the expected value is here because of the stochasticity in \eqref{eq:main}. Note that the integral is well-defined thanks to the discount factor and the fact that $x$ and $x^*$ are bounded. Concretely, the value of the deviation function $J^\pi(\zeta_t, t)$ indicates how bad it is for the agent to follow the policy $\pi$, starting from the state $\zeta_t$ at time $t$. The problem of the agent is thus
\begin{equation}
    \argmin_{\pi \in \Pi} J^\pi \label{eq:J_problem}
\end{equation}
with the same conditions as before.

\subsubsection{Modeling in practice}

The modeling framework described above is very general and can model many situations. In practice, in order to perform numerically efficient modeling or to obtain theoretical results more easily, one can use the general model as a starting point and perform two distinct types of modifications. The first type of modification involves reducing the generality of the model by making simplifying assumptions. The second type of modification concerns the choice of scale and the explicit definition of the variables involved, the possible actions, etc. We give in this section the simplifying assumptions we have made to reduce the generality of the model (first type of modification) for the purpose of this report.

For the dynamics, we assume that $x^{*}$ is constant, that there is no stochasticity and that $f$ is continuous. For decision-making, we assume that the actions are sufficiently elementary that there is no time interval during which the agent, after performing one action, cannot choose another. Here, the agent can change action at any time. The control resulting from an action is assumed to be deterministic, i.e. for the same chosen action, the resulting control will always be the same. We assume that the control function $u$ is piecewise continuous. The set of admissible actions $\mathcal{A}_{\zeta_t}$ and the policy $\pi$ do not depend on the variable $t$. The action space is discrete, but the state space is continuous.

\subsection{Theoretical results}

\subsubsection{An equivalent formulation of the optimization problem}

We define the reward at time $t$ for an agent whose internal state follows the trajectory function $\zeta = [\delta^{T}, e^{T}]^{T}$ as follows 
\begin{equation}
    r(t) = -(d(\delta))'(t).
    \label{eq:r}
\end{equation}
Intuitively, the reward, which can be positive or negative, is proportional to the variation of the drive of the agent, and thus to what the agent has gained (or lost) in comfort with respect to the stasis point between time $t$ and $t+dt$. This variation of the drive is implied by the control and thus by the action that the agent has taken at time $t$.

\begin{lemma}
The pursuit of homeostatic stability is equivalent to the maximization of the reward. Formally, we have
\begin{equation}
    \argmax_{\pi \in \Pi} V^\pi = \argmin_{\pi \in \Pi} J^\pi.\label{eq:equivalence_second_problem}
\end{equation}
\end{lemma}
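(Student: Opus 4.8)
The plan is to reduce the equivalence to a single algebraic identity relating $V^\pi$ and $J^\pi$ at each state, after which equality of the two optimizers is immediate. First I would invoke the simplifying assumption that there is no stochasticity, so the expectation in \eqref{eq:J} is vacuous and $J^\pi(\zeta_t,t)=\int_t^\infty \gamma^{s-t}\, d(\delta(s))\,\mathrm{d}s$. Abbreviating $D(s):=d(\delta(s))$ and inserting the reward definition \eqref{eq:r} into \eqref{eq:V}, I would write $V^\pi(\zeta_t,t)=-\int_t^\infty \gamma^{s-t} D'(s)\,\mathrm{d}s$. The goal is then to establish the identity $V^\pi(\zeta_t,t)=d(\delta_t)+(\ln\gamma)\,J^\pi(\zeta_t,t)$.

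The key step is an integration by parts on $-\int_t^\infty \gamma^{s-t}D'(s)\,\mathrm{d}s$, treating $\gamma^{s-t}=e^{(s-t)\ln\gamma}$ as the factor to be differentiated. This produces the boundary term $\big[-\gamma^{s-t}D(s)\big]_t^\infty$ together with the residual integral $(\ln\gamma)\int_t^\infty \gamma^{s-t}D(s)\,\mathrm{d}s=(\ln\gamma)\,J^\pi(\zeta_t,t)$. For the boundary term, at the lower limit $\gamma^{0}D(t)=d(\delta_t)$, and at the upper limit $\gamma^{s-t}D(s)\to 0$ as $s\to\infty$, since $\gamma\in\,]0,1[$ forces $\gamma^{s-t}\to 0$ while $D$ remains bounded (the boundedness of $x$ and $x^*$, already used to make \eqref{eq:J} well-defined, bounds $d(\delta(\cdot))$). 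Hence the boundary contribution is exactly $d(\delta_t)$, which yields the desired identity.

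With $V^\pi(\zeta_t,t)=d(\delta_t)+(\ln\gamma)\,J^\pi(\zeta_t,t)$ in hand, the conclusion follows state by state: the term $d(\delta_t)$ depends only on the fixed initial state $\zeta_t$ and not on $\pi$, and $\ln\gamma$ is a strictly negative constant because $\gamma\in\,]0,1[$. Maximizing an affine function of $J^\pi$ with negative slope over $\pi\in\Pi$ therefore selects exactly the policies that minimize $J^\pi$, which is the claimed equality \eqref{eq:equivalence_second_problem}. I expect the main obstacle to be the regularity bookkeeping in the integration by parts: the control $u$ is only piecewise continuous, so $\delta$ and hence $D$ are merely continuous with a piecewise-continuous derivative. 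I would handle this by applying integration by parts on each smooth subinterval and summing, observing that the interior boundary terms telescope away precisely because $D$ is continuous across the breakpoints, so that only the two endpoint contributions survive. A minor additional point to verify is that $D'$ is integrable against $\gamma^{s-t}$ on $[t,\infty[$, which again follows from the boundedness of $D$ and the exponential decay of the discount weight.
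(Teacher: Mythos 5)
Your proposal is correct and follows essentially the same route as the paper: integration by parts on $-\int_t^\infty \gamma^{s-t}(d(\delta))'(s)\,\mathrm{d}s$ to obtain the identity $V^\pi(\zeta_t,t)=d(\delta_t)+\ln(\gamma)\,J^\pi(\zeta_t,t)$, then concluding from $\ln(\gamma)<0$ and the policy-independence of $d(\delta_t)$. You simply spell out the details the paper leaves implicit (vanishing of the boundary term at infinity, telescoping across the breakpoints of the piecewise $\mathscr{C}^1$ trajectory), which is a faithful elaboration rather than a different argument.
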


\begin{proof}
We can notice by doing an integration by parts (valid even in the case where the function $\zeta$ is continuous everywhere and piece-wise $\mathscr C^1$, which is the case when $f$ is continuous and $u$ is piece-wise continuous) that we have
\begin{equation}
    V^\pi(\zeta_t, t) = d(\delta_t) + \ln(\gamma) J^\pi(\zeta_t, t) \label{eq:second_prob}
\end{equation}
with $ln(\gamma) < 0$. We can then conclude the proof.
\end{proof}

We have reformulated the problem in an equivalent way using the classical variables of reinforcement learning, which are the reward and the value function. This property has already been proved in the discrete case (\cite{Keramati}). It establishes a link between the maximization of the integral of the discounted rewards and the minimization of the integral of the discounted drive.

\subsubsection{Properties of the reward and the drive function}

In this section, we take the derivative of the reward function with respect to several quantities (realized in discrete time in Keramati), and study the sign to show the underlying properties of this function, reflecting behaviors in the agent.

We define the drive as
\begin{equation}
    d(\delta)= \sqrt{\delta^T \delta}
\end{equation}
(in practice, $\sqrt{\epsilon + \delta^T \delta}$ to take the derivative in $0$). The reward at time $t$ is therefore
\begin{equation}
    r(t) = -(d(\delta))'(t) = - \delta_{t}^{T} \dot{\delta_{t}} / \sqrt{\delta_{t}^{T} \delta_{t}}.
\end{equation}

We consider a situation in which an agent starts at time $t_0 = 0$ with a state $\delta_0 = [\delta_{0,1}, \delta_{0,2}, \dots]^T$, where $\delta_{0,1}$ and $\delta_{0,2}$ represent the levels of the agent's first two needs. From $t_0$ onwards, the agent continuously consumes the same resource which gives it a control $u = [m, 0, \dots, 0]$ constant in time, with $m$ the quantity of resource consumed per unit of time. Let us consider a time $t$ sufficiently close to $t_0$ so that the regulating effect of the body is negligible compared to the quantity of resource ingested. We have $\delta_t = \delta_0 + tu$ and the drive and the reward at time $t$ are

\begin{align}
  & d(t) = \sqrt{t^2m^2 + 2tm\delta_{0,1} + \delta_{0}^{T}\delta_{0}} \text{,} \\
  & r(t) = - \frac{(\delta_{0,1} + tm)m}{\sqrt{t^2m^2 + 2tm\delta_{0,1} + \delta_{0}^{T}\delta_{0}}}.
  \label{eq:drive_reward_derivative}
\end{align}

\textbf{Effects of deviation from the homeostatic set point for the feature receives an outcome:} Taking the derivative of the reward with respect to$|\delta_{0,1}|$, we find that

\begin{equation}
  \frac{\partial r(t)}{\partial |\delta_{0,1}|} \left\{
  \begin{aligned}
  \leq 0 & \text{ if } \delta_{0,1} \geq 0  \\ 
  \geq 0 & \text{ if } \delta_{0,1} \leq 0  \\ 
  \end{aligned}
\right.
.
\label{derivee_1}
\end{equation}

The first case means that if an agent has exceeded its homeostatic set point for a need, and it continues to consume a resource affecting this need, it will receive a punishment (negative reward) that is greater the higher its deviation from the homeostatic set point was initially. The second case means that for an agent deprived of a resource, a fixed amount consumed of that resource will have a greater motivational outcome if the agent's initial need for the resource was high rather than low, as observed in \cite{Hodos}.

\textbf{Cross need interactions, effects of deviation from the homeostatic set point for a feature that does not receive an outcome:} Taking the derivative of the reward with respect to $|\delta_{0,2}|$, we find that

\begin{equation}
  \frac{\partial r(t)}{\partial |\delta_{0,2}|} \left\{
  \begin{aligned}
  \leq 0 & \text{ if } \delta_{0,1} + tm \leq 0  \\ 
  \geq 0 & \text{ if } \delta_{0,1} + tm \geq 0   \\ 
  \end{aligned}
\right.
.
\label{derivee_2}
\end{equation}

In the first situation, $\delta_{0,1} + tm \leq 0$, so $x_{t,1} = x_{0,1} + tm \leq x^{*}_{1}$ and the agent is still below its homeostatic set point for the first need at time $t$. The agent will gain a positive reward by consuming the resource affecting her first need, but the negative derivative means that this reward will be reduced if $|\delta_{0,2}|$ increases. The interpretation of the second situation is similar, but the reward is now negative, since the agent has exceeded its homeostatic set point for the first need. Such inhibitory effects occur in nature, as shown experimentally by \cite{Dickinson}, with, for example, food deprivation tending to suppress water-related responses.

\textbf{Effects of resource dose:} Taking the derivative of drive with respect to $tm$, which is the amount of resource consumed at time $t$, we find that

\begin{equation}
  \frac{\partial d(t)}{\partial tm} \left\{
  \begin{aligned}
  \leq 0 & \text{ if } \delta_{0,1} + tm \leq 0  \\ 
  \geq 0 & \text{ if } \delta_{0,1} + tm \geq 0   \\ 
  \end{aligned}
\right.
.
\label{derivee_3}
\end{equation}

This means that if an agent has not (resp. has) reached its homeostatic setpoint for a need at time $t$, its training would have been smaller, and thus the agent would have been closer to its homeostatic setpoint, if the amount of resource consumed was larger (resp. smaller), as shown for rats in \cite{Skjoldager}.

\subsubsection{Hamilton-Jacobi-Bellman equation}

In reinforcement learning, the optimal value function and an optimal policy are often derived from a Bellman equation. \citeauthor{Doya} showed a continuous time extension with a discounted Hamilton-Jacobi-Bellman (HJB) equation on the value function $V$. By performing the same proof, we can have the equation on $J$ instead of $V$, which allows us to have one less derivative by replacing the reward by the drive:
\begin{equation}
    -\log(\gamma)J^{*}(\zeta_t) = \min_{a \in \mathcal{A}_{\zeta_t}} d(\zeta_t, u_a) + \frac{\partial J^{*}}{\partial \zeta}(\zeta_t) \cdot f(\zeta_t, u_a)
\end{equation}
where $J^{*}$ is the optimal deviation function, $u_a$ is the deterministic control resulting from the action $a$ and $d$ is the drive function, with the conventions that $d(\zeta_t, u_a)$ is the drive of the new state of the agent after performing the action $a$ in $\zeta_t$, and that symbolically $d(\zeta_t) = d(\delta_t)$. The intuition behind this equation is obtained with the optimality principle and by making the analogy with the known discrete Bellman equation (\cite{Sutton}). The $Q$-learning not being robust in the presence of small-time steps (\cite{Tallec}), we rely on this equation to propose our algorithm.

\subsection{Learning algorithm for the agent}

Here we present the Algorithm~\ref{learning_algo} that allows the agent to learn by interacting with its environment (see the graphical representation in Figure~\ref{fig:schema_algo}). The algorithm is based on the principle of policy improvement, where at each step a value function is evaluated, and the policy is updated directly using this value function. We note that classical reinforcement learning heuristics to improve the quality of learning are deliberately not implemented here, as the goal is not to propose the most efficient algorithm possible, but a proof of concept that it is possible to learn for an agent starting from zero knowledge and following a natural and plausible approach to action selection and learning from experience.

\begin{figure}[h]
  \centering
  \includegraphics[scale=0.205]{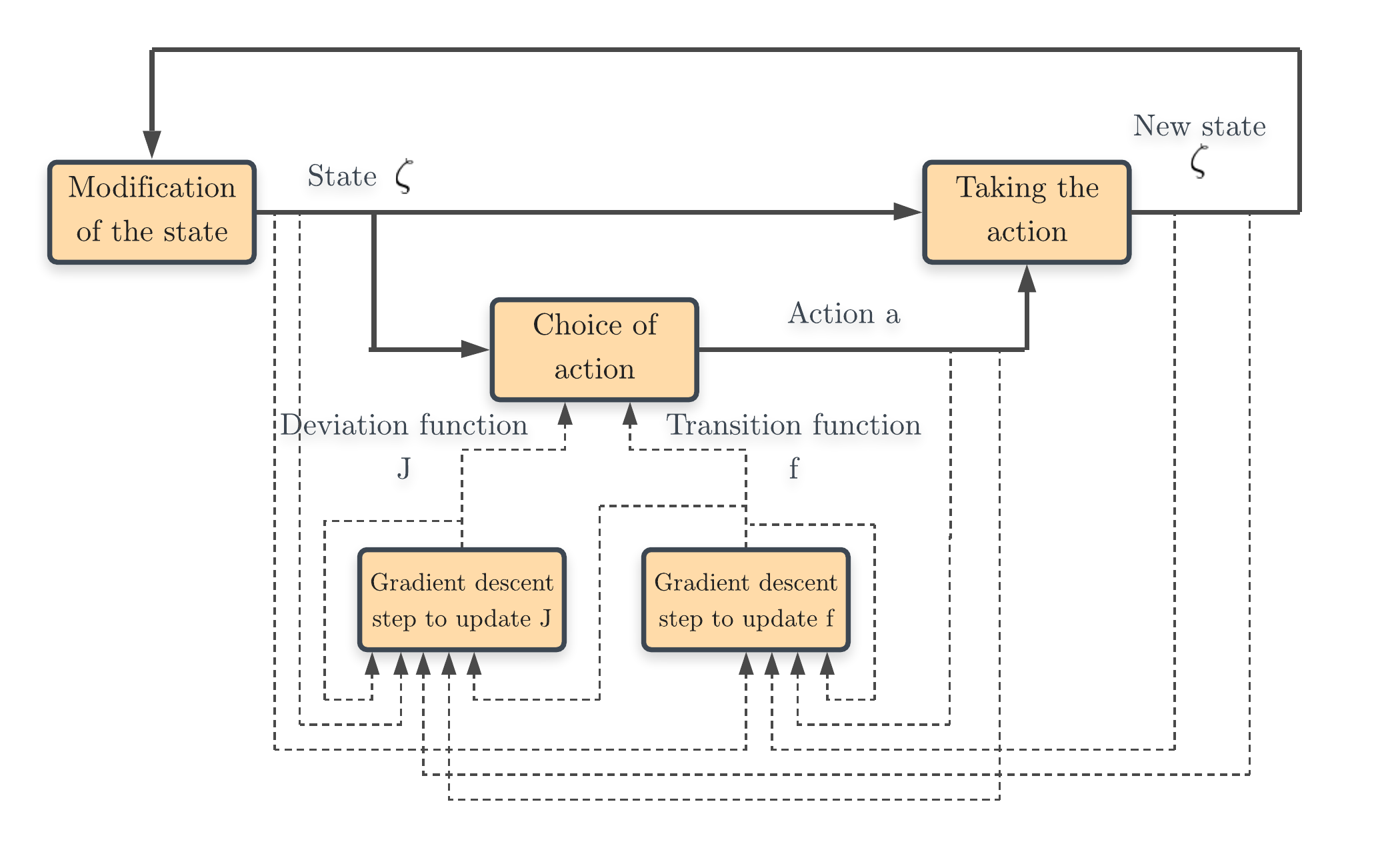}
  \caption{Visual diagram of Algorithm~\ref{learning_algo}. The dotted arrows are relative to the deviation function and the transition function.}
  \label{fig:schema_algo}
\end{figure}

We discretize time by $\Delta_t$ time steps. The discretization in time is necessary here to build the algorithm, but the proposal of a continuous theoretical framework is justified by a better modeling, an economy of notations, or the possibility to make adaptive time steps. In the initial state, the agent does not have access to the functioning of its internal state (the body), represented by the function $f$. Over time, the agent learns to approximate this function through its experiences. We thus have a model-based algorithm, since the transitions between internal states are modeled. On the other hand, the drive function $d$ is known initially, modeling the agent's a priori interoception. We note that, in classical RL, rewards are known (by definition) a priori.

The agent's action is either taken randomly with probability $\epsilon$ to facilitate exploration, or based on the HJB equation and estimates of $J$ and $f$. Note that, for a certain policy $\pi$, the deviation function $J$ is defined exactly with equation \eqref{eq:J}. However, this equation requires the calculation of and integral over the lifetime of the agent following this policy, which is impossible for the agent since it does not have access to the information of its future. Therefore, the agent maintains an estimation of $J$ instead, and update it according to Algorithm \ref{algo:algo_agent}.

The estimated transition and deviation functions are neural networks. The estimated deviation function $J$ is updated at each step by minimizing an associated error (\cite{Doya}). The gradient of the neural networks with respect to the inputs is also computed by backpropagation.

\begin{algorithm}
  \caption{Learning algorithm for the agent \label{algo:algo_agent}}
  \label{learning_algo}
  \begin{algorithmic}
    \STATE Randomly initialize the transition function $f(\zeta, u | \theta^{f})$ and the deviation function $J(\zeta | \theta^J)$ with weights $\theta^{f}$ and $\theta^{J}$
    \STATE Receive initial observation state $\zeta_1$
    \FOR{$k = 1,\dots, K$}
      \STATE With probability $\epsilon$ select a random action $a_k \in \mathcal{A}_{\zeta_k}$, otherwise select
      \begin{equation*}
          a_k = \argmin_{a \in \mathcal{A}_{\zeta_k}} d(\zeta_k + f(\zeta_k, u_{a} | \theta^{f})\Delta_t) + \frac{\partial J}{\partial \zeta}(\zeta_k | \theta^J) \cdot f(\zeta_k, u_a | \theta^{f})
      \end{equation*}
      \STATE Execute action $a_k$ and observe new state $\zeta_{k+1}$
      \STATE Update the transition function and the deviation function by performing a gradient descent step on
      \begin{align*}
        & L_f = (\zeta_{k+1} - \zeta_k - f(\zeta_k, u_{a_k} | \theta^{f})\Delta_t)^T (\zeta_{k+1} - \zeta_k - f(\zeta_k, u_{a_k} | \theta^{f})\Delta_t) \text{ with respect to } \theta^{f} \\
        & L_J = (d(\zeta_{k+1}) + \frac{\partial J}{\partial \zeta}(\zeta_k | \theta^{J}) \cdot f(\zeta_k, u_{a_k} | \theta^{f}) + \log(\gamma) J(\zeta_k | \theta^J))^2 \text{ with respect to } \theta^{J}
      \end{align*}
    \ENDFOR
  \end{algorithmic}
\end{algorithm}

\section{Experiment}

\subsection{Description of the experiment}

We consider a closed 2D environment, represented on Figure~\ref{fig:environment}. For a time $t$, the possible values of $\zeta_t$ are continuous, and the possible values of the action $a_t$ are discrete but the associated control $u_t$ is generally small.

The agent is identified by its coordinates in the plane and its orientation. This orientation influences the agent's vision, which has a visual field. At each instant, the agent can move forward by an elementary distance for the action of walking or by a greater elementary distance for the action of running. It can also rotate on itself by an angle to the left or to the right in order to change its direction and its vision.

The environment contains hidden resources necessary for the agent's survival. This can be identified for a biological agent as a source of proteins, a source of carbohydrates, water, etc... Each resource corresponds to a characteristic (dimension) of the agent's internal state. The resources are unlimited and do not appear randomly in space and remain at fixed points. This ensures that the agent has learned the configuration of its environment by returning directly to the supply points when it is in shortage. Otherwise, the agent does not need to keep track of positions and just has to move randomly until a resource is within its view-field and judges whether it is good to take in its current state. When the agent is in a circle close enough to a resource, it can choose at any time the action of consuming it in elementary quantity and stop when it wants. When a resource is within the agent view-field and close enough, it can choose to perform the succession of elementary movements leading it to this resource.

We endow the agent with physiological properties (internal state or body dynamics including fatigue and the state of immobilized sleep) and kinematic properties (distinct actions to walk, run).  The model of the internal state (the body) of the agent includes two types of fatigue, "muscle" fatigue, which depends on how far the agent has moved without resting (e.g., continuous movement), and "sleep" fatigue, if it has not recovered for too long. Splitting fatigue into such two separate terms, allowed us to reflect multiple behavioral and physiological aspects that cause natural agents (animals) to rest. Implementation of these properties is defined in the Appendix. 

At any time and in any place, the agent can choose the action of sleeping for a minimal renewable duration. This action will immobilize it for a certain duration. From a certain threshold of muscular fatigue, it is no longer possible to run, and from another higher threshold, it is also no longer possible to walk, and the agent must remain immobile to rest his muscles and reduce his muscular fatigue. Beyond a certain threshold of sleep-related fatigue, the only action that becomes possible is sleeping.

In this environment, the agent who starts with zero knowledge has the goal of finding a policy to minimize its deviation function. To do this, it will explore its environment and learn the reaction of its body to the actions it takes. Concretely, the agent, who only has access to $\zeta$, by taking actions in its environment, obtains rewards and updates its estimate of its transition function $f$, which will also allow it to update its deviation function $J$ useful to choose future actions.

The definitions of the parameters of the numerical simulation, the functions regulating the agent's body and the neural networks used are presented in the appendix.

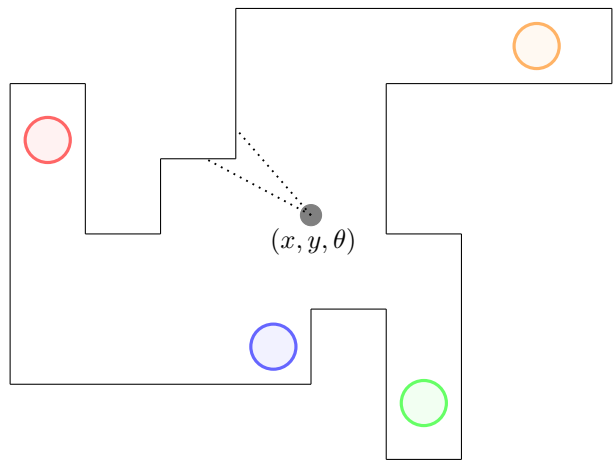
\begin{figure}
\centering
\begin{tikzpicture}

\draw (1,1) -- (1,5);
\draw (1,5) -- (2,5);
\draw (2,5) -- (2,3);
\draw (2,3) -- (3,3);
\draw (3,3) -- (3,4);
\draw (3,4) -- (4,4);
\draw (4,4) -- (4,6);
\draw (4,6) -- (9,6);
\draw (9,6) -- (9,5);
\draw (9,5) -- (6,5);
\draw (6,5) -- (6,3);
\draw (6,3) -- (7,3);
\draw (7,3) -- (7,0);
\draw (7,0) -- (6,0);
\draw (6,0) -- (6,2);
\draw (6,2) -- (5,2);
\draw (5,2) -- (5,1);
\draw (5,1) -- (1,1);

\filldraw[color=red!60, fill=red!5, very thick](1.5,4.25) circle (0.3);
\filldraw[color=blue!60, fill=blue!5, very thick](4.5,1.5) circle (0.3);
\filldraw[color=orange!60, fill=orange!5, very thick](8,5.5) circle (0.3);
\filldraw[color=green!60, fill=green!5, very thick](6.5,0.75) circle (0.3);

\filldraw [gray] (5,3.25) circle (4pt);
\draw[dotted, line width=0.25mm] (5,3.25) -- (3.6,4);
\draw[dotted, line width=0.25mm] (5,3.25) -- (4,4.4);
\node[text width=3cm] at (6,2.9) {$(x,y,\theta)$};
\end{tikzpicture}

\caption{The environment of the simulation experiment. The agent is represented by a gray point and is located by its coordinates in the plane and its orientation. The colored circles delimit the space in which it is possible to consume a resource.}
\label{fig:environment}
\end{figure}

\subsection{Results of the experiment}

After a sufficient number of iterations of the algorithm, we can empirically observe that the agent has learned well to locate resources in the environment and identify the resources it needs based on its current state.

To provide a more quantitative measure of this result, we can track the agent's internal state at the end of the training experiment. To do this, we plot the deviation function for different levels of need (see Figure~\ref{fig:results_J_function}), for example$(c_1, c_2) \mapsto J([0, 0, 0, -x_{4}^{*}, 0, 0, c_1, c_2, 0]^{T})$, which is the plot of the in-plane deviation function for the state of an agent perfectly at its homeostatic point for all but the fourth feature, the access point of which is in the green circle at the bottom right of the environment. 

We can notice in this figure that when the agent needs to consume a resource, the values of the deviation function are lower in the locations close to this resource, which indicates that the agent has learned to identify the advantageous locations in its environment according to its internal needs.

\begin{figure}
\centering
\begin{tabular}{cc}
\subfloat[]{\includegraphics[width = 2.7in]{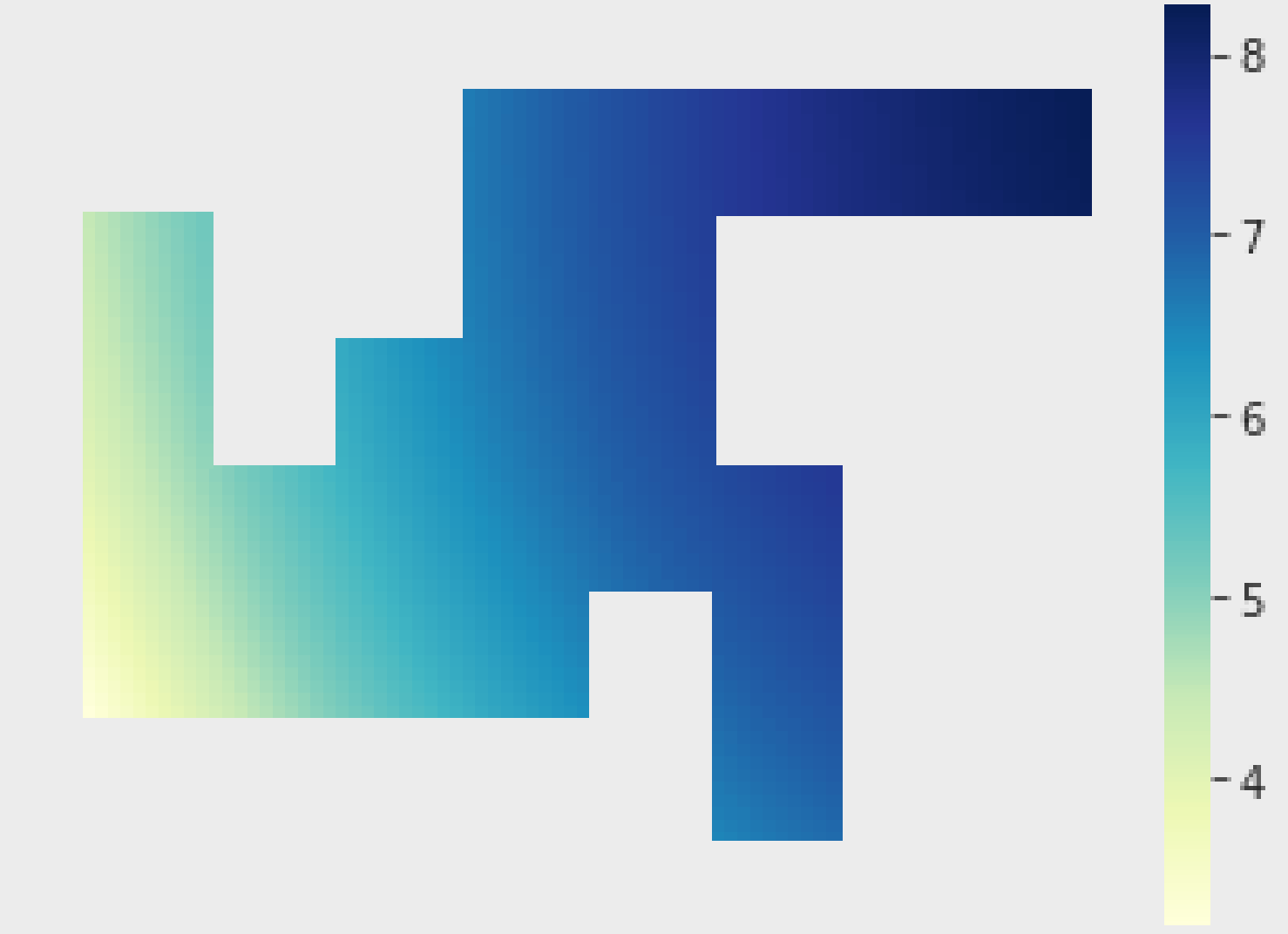}} &
\subfloat[]{\includegraphics[width = 2.7in]{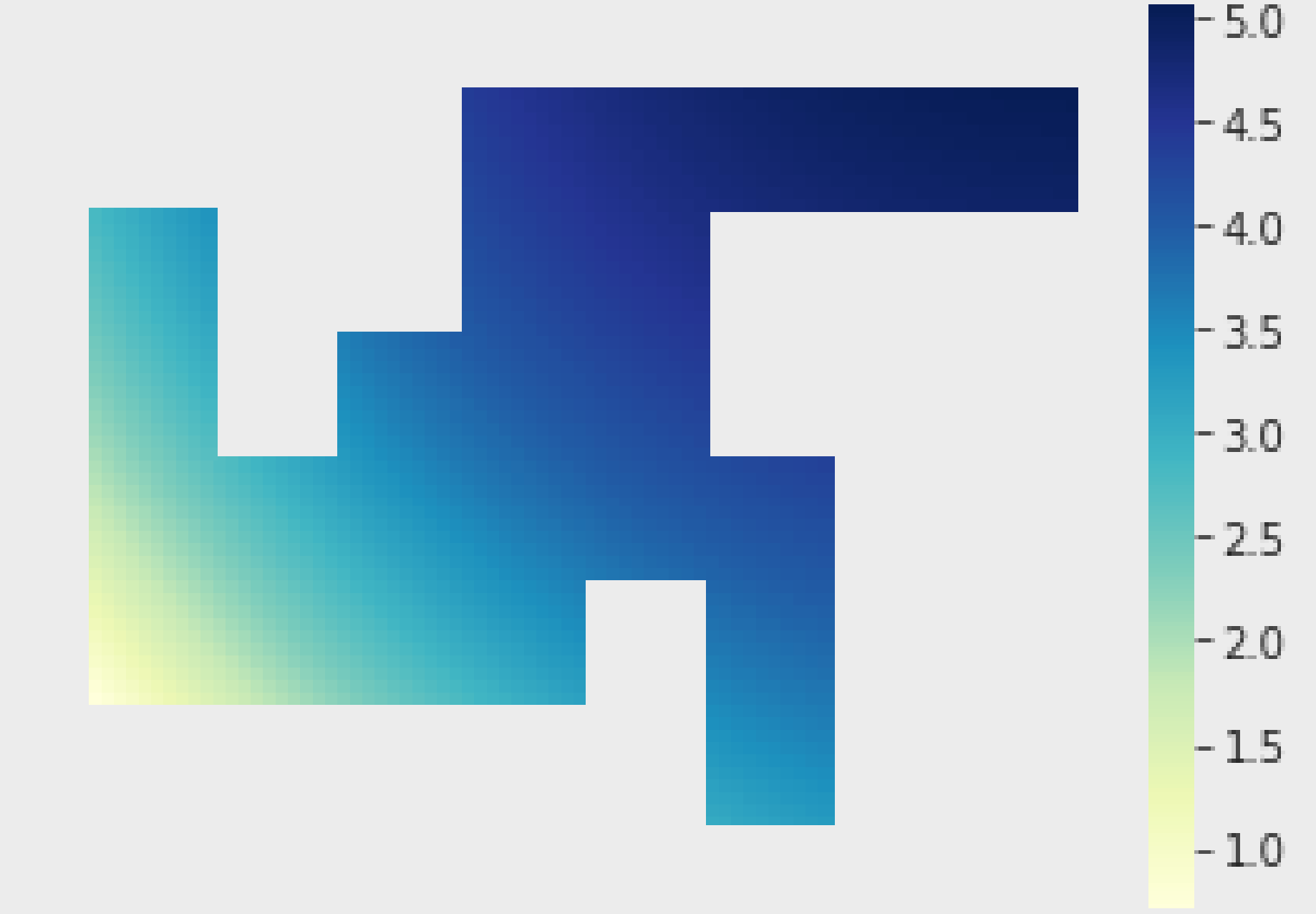}} \\
\subfloat[]{\includegraphics[width = 2.7in]{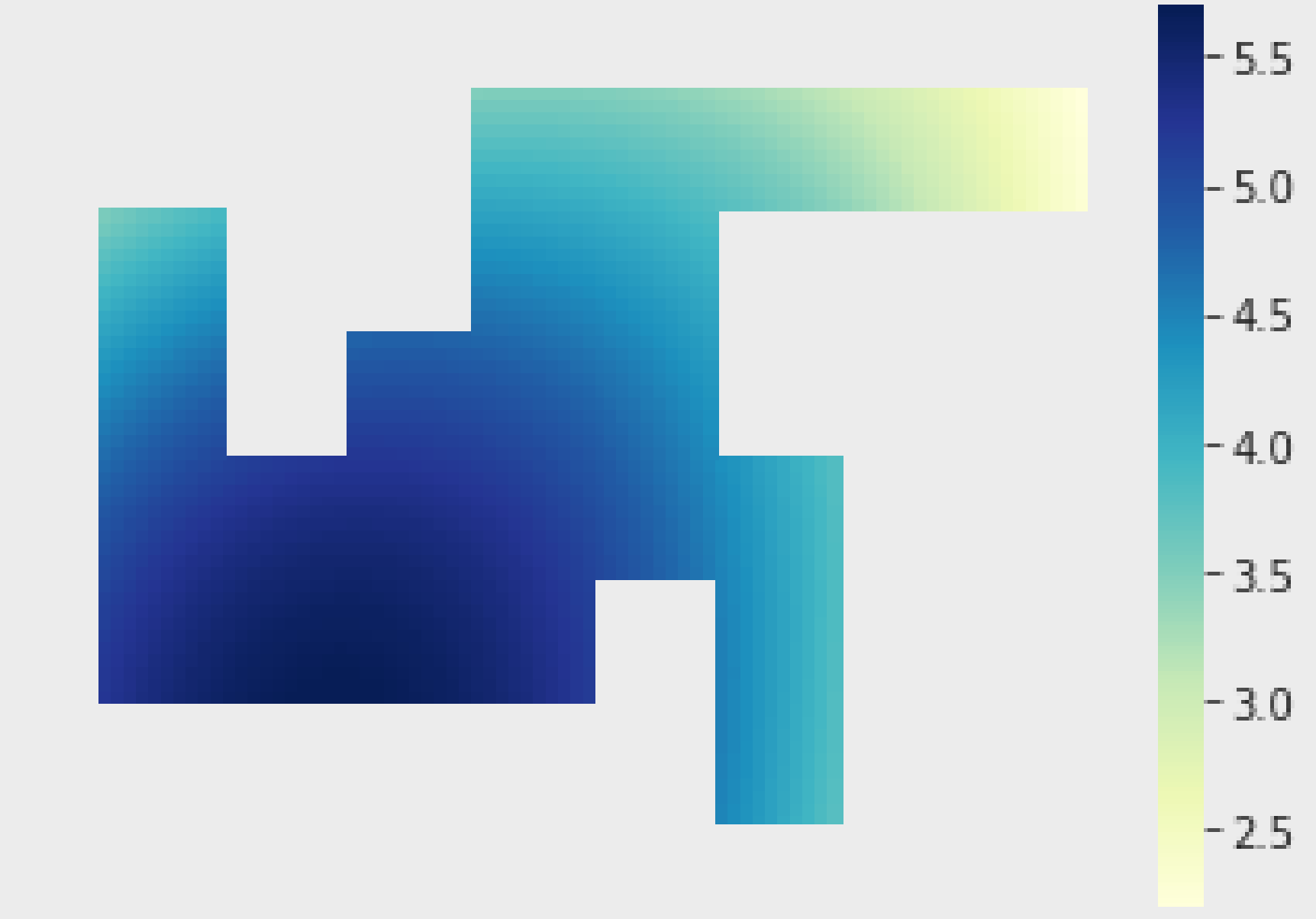}} &
\subfloat[]{\includegraphics[width = 2.7in]{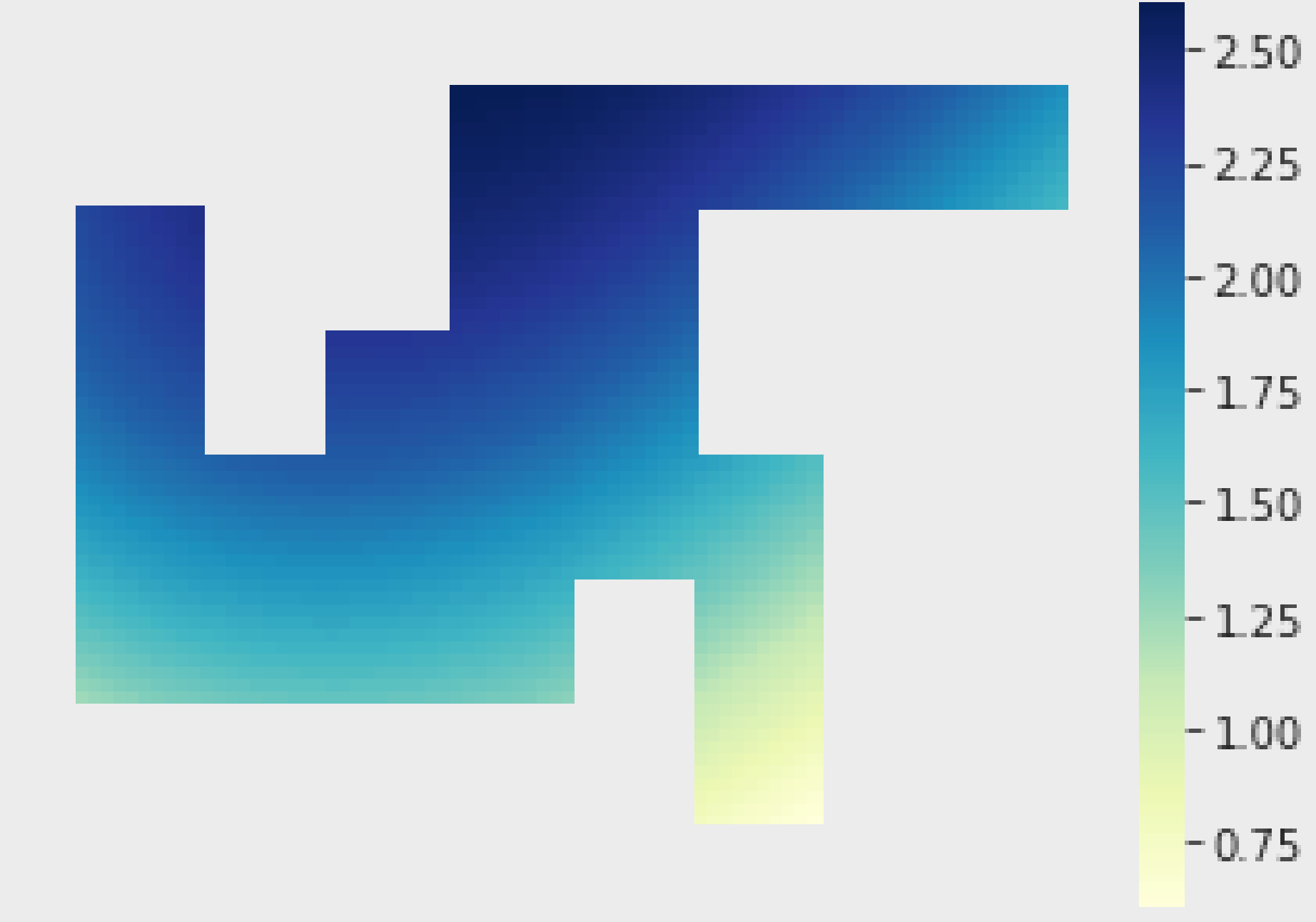}}
\end{tabular}
\caption{Plot of the agent deviation as a function of the environment coordinates at the end of the training. (a) $(c_1, c_2) \mapsto J([-x_{1}^{*}, 0, 0, 0, 0, 0, c_1, c_2, 0]^{T})$. (b) $(c_1, c_2) \mapsto J([0, -x_{2}^{*}, 0, 0, 0, 0, c_1, c_2, 0]^{T})$. (c) $(c_1, c_2) \mapsto J([0, 0, -x_{3}^{*}, 0, 0, 0, c_1, c_2, 0]^{T})$. (d) $(c_1, c_2) \mapsto J([0, 0, 0, -x_{4}^{*}, 0, 0, c_1, c_2, 0]^{T})$.}
\label{fig:results_J_function}
\end{figure}

\section{Conclusion}

The HRRL theory, a reinforcement learning extension of the classical drive reduction theory, offers a compact and simple modeling of the agent's needs and the rewards offered by an environment. In this framework, we simulated agents interacting and learning in their environment. This theory allows taking into account several types of rewards, without having to define them explicitly, to model directly primitive behaviors, and to have an intuition on more complex behaviors. We have extended the HRRL theory by providing an agent with a self-regulation mechanism of its internal state, a better consideration of temporal aspects by developing in particular a continuous time formulation of the HRRL and a learning scheme to survive in its environment. As suggested in \cite{Man}, neural networks were used to map the internal space to the external space. We showed that continuous-time HRRL is normative and naturally yields properties observed in behaviors, e.g., dose dependence of reward, modulation of reward value with deprivation (in analogy to incentive salience theory (ISR)), bounded rationality discussed in prospect theory - such as risk avoidance, loss sensitivity. We present an example of a simulation experiment, showing that our HRRL agent is able to learn to gather resources from the environment to satisfy its internal needs, also taking into account the dynamics of the internal state and the cost of actions.

However, explaining complicated behaviors is challenging because they may not be explicitly oriented to satisfy a primary life need or have a direct relationship with the body, and are mostly individual-specific. We can give as examples pro-social behaviors, self-sacrifice, desire for recognition, gambling, etc. Nevertheless, it is possible that these behaviors are somehow translated, on a small scale, into a set of characteristics that could be represented in terms of $x$ motor function variables. Indeed, \citeauthor{Juechems} argues that even non-physiological motivations can be modeled using the HRRL framework. They argue that when a human being seeks to accomplish a long-term goal, he or she plans in terms of intermediate goals to be achieved (e.g., wealth accumulation appears to be a series of financial goals to be achieved). Thus, there would be a similarity in the structure of learning when achieving a goal, whether it is a complicated goal like acquiring financial stability or a more primitive goal like having the right internal temperature. Our model is therefore essentially limited by knowledge of the human body and the structure of more abstract needs, which means that defining specific training functions for complex goals is a challenge. HRRL is also challenged by the number of internal variables that can become very large, making it difficult for the algorithm to converge and thus for the agent to learn. A research goal could be to model robots with automatically learned human characteristics, evolving and interacting together in an environment. Despite its limitations, the simplicity of our model and its ability to have an arbitrary choice of scale may have an impact on this goal. We have made simplifications on the generality of the simulation, notably on the lack of stochasticity by not putting other agents in the environment, but we could in the future make simulations with several agents including prey and predators. In a multi-agent simulation, we could also create a colony of agents in which each member implements de facto empathy for its cohorts, as suggested by \cite{Man}, e.g., by modeling an agent's welfare as a function of the weighted average of the internal states of all colony members. It would be interesting to understand how individual inferences about the collective welfare (internal states) of the social group can be learned. We also leave the door open for future research to incorporate more complicated behaviors into the model and to provide theoretical results on the stability of the agent learning algorithm.

\section*{Acknowledgement}

This work has been supported by ANR-17-EURE-1553-0017, and ANR-10-IDEX-0001-02. BSG acknowledges funding from the Basic Research Program at the National Research University Higher School of Economics (HSE University). The funders had no role in study design, data collection and analysis, decision to publish, or preparation of the manuscript.

\newpage
\nocite{*}
\bibliographystyle{abbrvnat}

\end{document}